\title{Multiclass Learnability Does Not Imply Sample Compression}
\author{Chirag Pabbaraju\thanks{Stanford University. Email: \texttt{cpabbara@cs.stanford.edu.}}}
\date{\today}
\begin{document}

\maketitle

\begin{abstract}
A hypothesis class admits a sample compression scheme, if for every sample labeled by a hypothesis from the class, it is possible to retain only a small subsample, using which 
the labels on the entire sample can be inferred. The size of the compression scheme is an upper bound on the size of the subsample produced. Every learnable binary hypothesis class (which must necessarily have finite VC dimension) admits a sample compression scheme of size only a finite function of its VC dimension, independent of the sample size. For multiclass hypothesis classes, the analog of VC dimension is the DS dimension. We show that the analogous statement pertaining to sample compression is not true for multiclass hypothesis classes: every learnable multiclass hypothesis class, which must necessarily have finite DS dimension, does not admit a sample compression scheme of size only a finite function of its DS dimension.
\end{abstract}

\section{Introduction}
\label{sec:intro}

Sample compression is a widely studied paradigm in learning theory. At a high level, the main question that sample compression asks is: given a labeled training dataset, is it possible to get by working only with a small fraction of the dataset? A valid \textit{sample compression scheme} gets rid of all the uninformative points in the dataset, and \textit{compresses} the sample to a much smaller subsample, such that there exists an algorithm that can \textit{reconstruct} all the labels on the original sample correctly just from the compressed sample. A classical example of sample compression is exhibited by \textit{support vector machines} for the task of classifying linearly separable data. Here, the compressor may only send the support vectors in the data to the reconstructor. The reconstructor goes on to build a hyperplane that maximally separates the support vectors with the largest possible margin; this, in turn, also recovers correct labels on the non-support-vector points. In the language of learning theory, if such compression-reconstruction is possible for every sample \textit{realizable} by a hypothesis class $\barH$, we say that the class $\barH$ admits a sample compression scheme. In this case, the \textit{size} of the compression scheme is the size $k(m)$ that a sample of size $m$ gets compressed down to. 

In fact, sample compression is intrinsically tied up with the \textit{learnability} of binary hypothesis classes (where the label space is $\{0,1\}$). Formally, \cite{littlestone1986relating} showed that every binary class $\barH$ that admits a sample compression scheme of size $k(m)$ also defines a PAC (Probably-Approximately-Correct) \cite{valiant1984theory} learning algorithm for the class having sample complexity $O(k(m))$. Thus, compression implies learnability in the case of binary classes. In their work, \cite{littlestone1986relating} also asked the converse: does learnability imply compression? Since the PAC learnability of a binary class $\barH$ is completely characterized by the finiteness of its VC dimension $\VC(\barH)$ \cite{vapnik1974theory, vapnik2015uniform, blumer1989learnability}, this question is equivalent to asking: does every binary class $\barH$ having finite VC dimension $\VC(\barH)$ admit a sample compression scheme of size only a finite function of $\VC(\barH)$?

A long line of insightful works on this question culminated with \cite{moran2016sample} answering it in the affirmative. For any binary class $\barH$ having VC dimension $d_{\VC}$, \cite{moran2016sample} constructed a sample compression scheme of size $2^{O(d_{\VC})}$. Prior to their work, existing sample compression schemes had a dependence either on the sample size $m$ (e.g., compression of size $O(d_{\VC}\cdot\log(m))$ via boosting due to \cite{freund1995boosting, freund1997decision}), or on the size of $\barH$ (e.g., compression of size $O(2^{d_\VC}\cdot\log\log|\barH|)$ due to \cite{moran2017teaching}).\footnote{For a detailed and exhaustive list of other prior compression schemes, we refer the reader to Section 1.2.2 in \cite{moran2016sample}.} The work of \cite{moran2016sample} gets rid of both these dependencies and obtains a compression scheme of size only a function of the VC dimension, thus establishing the equivalence of learnability and sample compression for binary hypothesis classes. It is worth mentioning that constructing sample compression schemes of size even sub-exponential in $d_{\VC}$ is still open, and has been a longstanding famous problem in learning theory \cite{warmuth2003compressing}!

For essentially the same reasons that compression implies learnability in the binary case, compression also implies learnability in the \textit{multiclass} case (where the label space is not just $\{0,1\}$ but much larger, possibly infinite too), as was observed by \cite{david2016statistical}. Here too, we can ask the counterpart of \cite{littlestone1986relating}'s question: does multiclass learnability imply sample compression? In fact, the notion of what learnability means in the multiclass setting was only fully established in a recent seminal work by \cite{brukhim2022characterization}, who equated PAC learnability of a class with finiteness of its DS dimension, which was first introduced in the work of \cite{daniely2014optimal}. Concretely, while finiteness of the DS dimension was known to be necessary for learnability, \cite{brukhim2022characterization} also constructed an algorithm that successfully learns classes having finite DS dimension. Thus, the natural question to ask is: does every multiclass hypothesis class $\barH$ having finite DS dimension $\DS(\barH)$ admit a sample compression scheme of size only a finite function of $\DS(\barH)$?

Interestingly, the route that \cite{brukhim2022characterization} take to construct a learning algorithm for hypothesis classes having finite DS dimension \textit{is} via sample compression. Concretely, for any sample of size $m$ realizable by a hypothesis class of DS dimension $d_{\DS}$, they construct a sample compression scheme of size $\tilde{O}(d_{\DS}^{1.5}\cdot \polylog(m))$. This $\polylog(m)$ dependence on the size of the compression scheme is indeed reminiscent of the analogous dependence in the boosting-based compression scheme of \cite{freund1995boosting, freund1997decision} for binary classes. Given that this dependence was ultimately removed in the work of \cite{moran2016sample}, could it also be altogether gotten rid of in the multiclass case?


We answer this question in the negative, and show that a dependence on the sample size $m$ is indeed necessary in the compression size for any valid sample compression scheme in the multiclass setting. Our main result is the following:
\begin{theorem}[Multiclass Learnability $\not \Rightarrow$ Compression]
    \label{thm:compression-lb}
    There exists a hypothesis class $\barH$ mapping a domain $\mcX$ to $\mcY=\{0,1,2,\dots\}$ that satisfies:
    \begin{enumerate}
        \item[(1)] $d_{\DS}(\barH)=1$.
        \item[(2)] Any sample compression scheme for $\barH$ that compresses labeled samples of size $m$ to a subsample of size $k(m)$ must satisfy $k(m)=\Omega((\log(m))^{1-o(1)})$, where the $o(1)$ term goes to 0 as $m \to \infty$. 
    \end{enumerate}
\end{theorem}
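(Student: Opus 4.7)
The plan has two main components: a generic packing/counting argument that converts ``many realizable labelings'' into a compression lower bound, and the construction of a specific class $\barH$ with $d_{\DS}(\barH)=1$ that exhibits many labelings on some sample.

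\textbf{Packing step.} Fix any sample $X=(x_1,\dots,x_m)$ and let $N = |\barH|_X|$ denote the number of distinct realizable labelings of $X$. Any compression scheme of size $k(m)$ must map distinct realizable labelings of $X$ to distinct compressed outputs, for otherwise the reconstructor would be forced to produce a single hypothesis agreeing with two different labelings on $X$. Hence
\[
N \;\le\; \sum_{j=0}^{k(m)} \binom{m}{j}\,\max_{|X'|=j}\bigl|\barH|_{X'}\bigr|.
\]
If on the chosen sample $\barH$ uses only $L=\mathrm{poly}(m)$ distinct labels per point, this simplifies to $N \le (mL)^{k(m)}$ and thus $k(m) = \Omega(\log N / \log m)$. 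So to reach the target $k(m) = \Omega((\log m)^{1-o(1)})$, it is enough to exhibit, for every $m$, a sample of size $m$ on which $\barH$ realizes $N \ge 2^{(\log m)^{2-o(1)}}$ labelings.

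\textbf{Building the class.} The condition $d_{\DS}(\barH)=1$ means no pair $\{x,y\}\subseteq\mcX$ is DS-shattered. Unpacking the definition, this is equivalent to: for every pair $\{x,y\}$, the bipartite graph $G_{x,y}$ whose edges are the realized pairs $(h(x),h(y))$ over $h\in\barH$ is a \emph{forest}. (A cycle in $G_{x,y}$ is a subgraph with minimum degree~$2$, and such a subgraph produces a 2-dimensional pseudo-cube, hence DS-shatters $\{x,y\}$; conversely, any pseudo-cube is a minimum-degree-$2$ subgraph and so contains a cycle.) Forests on an infinite label set can still carry unboundedly many edges, so the forest condition is not a priori incompatible with richness. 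A natural direction is a hierarchical construction: organize $\mcX$ along a tree-like structure and index hypotheses by ``paths'' through this tree, letting the label at a point encode the cumulative history of choices up to that point. Any two points then stand in a prefix/extension relationship in their labels, so $G_{x,y}$ becomes a star forest; meanwhile, by tuning the branching and the depth of the hierarchy one hopes to achieve $\exp((\log m)^{2-o(1)})$ distinct root-to-leaf labelings on a well-chosen sample of size~$m$.

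\textbf{Main obstacle.} The crux is the tension between the forest condition on \emph{every} pair of coordinates -- an extremely restrictive global condition -- and the demand for super-polynomially many distinct labelings on a single sample, while keeping the number of labels used per point at most polynomial in $m$ (otherwise labels themselves can encode arbitrary information and the counting bound becomes vacuous). Classes that obviously satisfy DS-1, such as indicator-like constructions, realize only polynomially many labelings; richer classes, e.g.\ permutation- or Latin-square-based ones, introduce 4-cycles in $G_{x,y}$ and violate the DS bound. The heart of the proof will be an explicit (likely recursive) hierarchical construction together with a delicate combinatorial verification that the bipartite realizability graph on every pair of domain points is acyclic, regardless of how far apart those points sit in the hierarchy -- this acyclicity check across all ``cross-level'' pairs is where I expect the argument to become most intricate.
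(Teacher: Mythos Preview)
Your packing step is sound in outline, but the construction plan runs into a Sauer--Shelah--Perles obstruction that cannot be circumvented. Observe that $d_{\DS}(\barH)=1$ forces $d_N(\barH)\le 1$: any Natarajan-shattered pair $\{x,y\}$ with witnesses $f_1,f_2$ produces four hypotheses forming a $4$-cycle in your graph $G_{x,y}$, hence a DS-shattering. But then the multiclass Sauer--Shelah--Perles lemma of Haussler bounds the number of restrictions to any $m$-point sample by $1+m\binom{L}{2}$, where $L$ is the number of labels in use. So if $L$ is polynomial in $m$ as you stipulate, $N$ is polynomial in $m$ as well, and your counting collapses to $k(m)=\Omega(1)$. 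Even letting $L$ grow, the ratio $\log N/\log(mL)$ stays bounded by~$2$. Thus no total class with $d_{\DS}=1$ can simultaneously satisfy your richness requirement $N\ge 2^{(\log m)^{2-o(1)}}$ and your label-sparsity requirement; the hierarchical tree you sketch (where labels encode path prefixes and hence $L\approx N$) already illustrates this, and any other direct construction will fail at the same point, not merely be ``intricate.''

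The paper sidesteps this obstruction by routing through a \emph{partial} concept class, where the Sauer--Shelah--Perles bound does not apply. It takes the $\{0,1,\star\}$-valued partial class $\mcH$ of Alon et al.\ (built from the Alon--Saks--Seymour lower bound of Balodis et al.), which has $d_{\DS}=1$ yet provably requires compression size $(\log m)^{1-o(1)}$; the counting works there because only two real labels ever appear. It then disambiguates $\mcH$ to a total class $\barH$ by assigning each partial concept its own fresh label on its $\star$-region, which keeps $d_{\DS}(\barH)=1$. Since every $\mcH$-realizable sample is $\barH$-realizable, any compression scheme for $\barH$ is automatically one for $\mcH$, and the lower bound transfers. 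The essential point you are missing is that the counting must be performed on the two-label partial class $\mcH$, not on $\barH$ itself---which, by the argument above, is \emph{forced} to use superpolynomially many labels.
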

This result means that unlike the binary case, we cannot hope to obtain a sample compression scheme  in the multiclass setting where the size of the scheme is a finite function of only the DS dimension of the hypothesis class. Instead, the size of any compression scheme must necessarily depend on the sample size. Note again that (1) above implies $\barH$ is learnable. Therefore, while compression implies learnability, learnability does not imply sample compression in the multiclass case, thus exhibiting a separation amongst the two paradigms in the binary and multiclass case. The rest of the paper is devoted to establishing \Cref{thm:compression-lb} and discussing the result.
\section{Preliminaries and Background}
\label{sec:prelim}

The input data domain is denoted as $\mcX$ and the label space as $\mcY$. Concepts and hypotheses are interchangeably used to mean the same object. To prove our result, we will require dealing with \textit{partial} concept classes $(\mcX \to \{\mcY \cup \{\star\}\})$, where $\star$ is a special symbol, and hence we will denote the otherwise standard \textit{total} concept classes $(\mcX \to \mcY)$ with symbols having a bar on top (e.g., $\barH$) and partial classes without a bar (e.g., $\mcH$). When we are thinking of partial classes and want the label space to additionally also include the special symbol $\star$, we will be explicit and use $\{\mcY \cup \{\star\}\}$ --- otherwise, $\mcY$ should be assumed to not include $\star$. For a sequence $S \in \mcX^d$, we denote the restriction of a class $\mcH$ on $S$ (all the different ways in which members of $\mcH$ can label $S$) by $\mcH|_S$.

\subsection{Partial Concept Classes and Disambiguation}
\label{subsec:pcc}
Our lower bound heavily relies on the theory of partial concept classes introduced in the work of \cite{alon2022theory}. Concepts in a partial concept class are allowed to be undefined in certain regions of the input domain. These regions vary based on known structural assumptions on the data like margin-separatedness, data lying on a low-dimensional subspace, etc. 
\begin{definition}[Partial Concept Classes \cite{alon2022theory}]
    \label{def:pcc}
    Given an input space $\mcX$, a label space $\mcY=\{0,1,2,\dots\}$, and a special symbol $\star$, a partial concept class $\mcH$ maps $\mcX$ to $\mcY \cup \{\star\}$ i.e., $\mcH \subseteq \{ \mcY \cup \{\star\}\}^\mcX$. For any $h \in \mcH$, if $h(x)=\star$, we say that $h$ is undefined at $x$. The support of a partial concept $h \in \mcH$ is defined as $\supp(h)=\{x \in \mcX:h(x)\neq\star\}$, and $\supp(\mcH)=\bigcup_{h \in \mcH}\supp(h)$. A labeled sequence $S = \{(x_1,y_1),\dots,(x_m,y_m)\}$ is realizable by $\mcH$ if there exists a partial concept $h \in \mcH$ such that $\forall i \in [m],~ h(x_i) \neq \star \text{ and } h(x_i) = y_i$. 
\end{definition}

If every concept in the partial concept class has full support, the $\star$ symbol becomes irrelevant and we get the usual notion of a total concept class.
\begin{definition}[Total Concept Classes]
    \label{def:tcc}
    A partial concept class $\barH \subseteq \{ \mcY \cup \{\star\}\}^\mcX$ that satisfies $\supp(\barh) = \mcX,~ \forall \barh \in \barH$, is a total concept class.
\end{definition}
Total concept classes naturally define the notion of ``disambiguation'' of partial concept classes.
\begin{definition}[Disambiguation, Definition 9 in \cite{alon2022theory}]
    \label{def:disambiguation}
    A total concept class $\barH \subseteq \mcY^\mcX$ disambiguates a partial concept class $\mcH$ if for every finite labeled sequence $S=\{(x_1, y_1),\dots,(x_m,y_m)\}$ realizable by $\mcH$, there exists $\barh\in \barH$ such that $\forall i \in [m],~ \barh(x_i)=h(x_i)$.
\end{definition}
The hard-to-compress hypothesis class that realizes our lower bound in \Cref{thm:compression-lb} will be a suitable disambiguation of a hard-to-compress partial concept class. Next, we define the relevant complexity parameter that completely captures learnability of a multiclass hypothesis class --- the DS dimension.

\subsection{DS Dimension}
\label{subsec:ds}

As mentioned above, while the VC dimension of a binary (total) concept class was long known to completely characterize its learnability, the corresponding problem of characterizing the learnability of a class on multiple classes was only resolved recently in the work of \cite{brukhim2022characterization}. They showed that a combinatorial complexity parameter called the DS dimension (due to \cite{daniely2014optimal}) is the appropriate equivalent of the VC dimension in terms of characterizing learnability of multiclass concept classes. Since total classes are special cases of partial classes, we define the DS dimension more generally for multiclass \textit{partial} concept classes below.
\begin{definition}[DS dimension \cite{daniely2014optimal}]
    \label{def:ds-dim}
    Let $\mcH \subseteq \{ \mcY \cup \{\star\}\}^\mcX$ be a partial concept class and let $S = \{x_1,\dots,x_d\} \in \mcX^d$ be an unlabeled sequence.  For $i \in [d]$, we say that $f,g \in \mcH|_S$ are $i$-neighbours if $f(x_i) \neq g(x_i)$ and $f(x_j) = g(x_j), \; \forall j \neq i$. We say that $\mcH$ DS-shatters $S$ if there exists $\mcF \subseteq \mcH, |\mcF|<\infty$ satisfying
    \begin{enumerate}
    \item $\forall f \in \mcF|_S ,\; \forall i \in [d],$ $f(x_i) \neq \star$.
    \item $\forall f \in \mcF|_S ,\; \forall i \in [d],$ $f$ has at least one $i$-neighbor $g$ in $\mcF|_S$.
    \end{enumerate}
    The DS dimension of $\mcH$, denoted as $d_{\DS}=d_{\DS}(\mcH)$, is the largest integer $d$ such that $\mcH$ DS-shatters\footnote{When $\mcY = \{0,1\}$, DS-shattering is equivalent to the standard notion of VC-shattering i.e., realizability of all binary patterns.} some sequence $S$ of size $d$.
\end{definition}

\subsection{Sample Compression Schemes}
\label{subsec:sample-compression}

The way in which \cite{brukhim2022characterization} construct a learning algorithm for multiclass concept classes having finite DS dimension is through a sample compression scheme. This is sufficient because a successful sample compression scheme implies the existence of a learning algorithm \cite{david2016statistical}. Here, we formally define sample compression schemes.
\begin{definition}[Sample Compression, Definition 29 in \cite{alon2022theory}]
    \label{def:sample-compression}
    A compression scheme $(\kappa, \rho)$ consists of a compression function $\kappa: (\mcX \times \mcY)^* \to (\mcX \times \mcY)^* \times \{0,1\}^*$ and a reconstruction function $\rho: (\mcX \times \mcY)^{*} \times \{0,1\}^{*} \to \mcY^\mcX$. $\kappa$ and $\rho$ must satisfy the following property: for any sequence $S \in (\mcX \times \mcY)^*$, $\kappa(S)=(S', B)$ such that the elements in $S'$ necessarily also exist in $S$.
    The size $k(m)$ of the compression scheme for a given sample size $m$ is 
    \begin{equation}
        k(m)=\max_{S \in (\mcX \times \mcY)^m, (S', B)=\kappa(S)}\max(|S'|, |B|).
    \end{equation}
    The (unqualified) size $k$ of the compression scheme is the maximum size $k(m)$ over all $m$, or infinite if the size can be unbounded.
    
    A compression scheme $(\kappa, \rho)$ is a sample compression scheme for a partial concept class $\mcH \in \{ \mcY \cup \{\star\}\}^\mcX$ if for all finite labeled sequences $S=\{(x_1,y_1),\dots,(x_m,y_m)\}$ realizable by $\mcH$, $\rho(\kappa(S))$ is correct on all of $S$ i.e., $\forall i \in [m],~ \rho(\kappa(S))(x_i)=y_i$. 
\end{definition}
\begin{remark}
    \label{rem:pcc-compression}
    Observe that we only care about compressing sequences \textit{realizable} by the class (no points in the sequence should be labeled with a $\star$), and that the reconstructor $\rho$ always outputs a \textit{total} concept.
\end{remark}

\subsection{Sample Compression Scheme of \texorpdfstring{\cite{moran2016sample}}{msc}}
The $2^{O(d_{\VC})}$-sized compression scheme of \cite{moran2016sample} requires crucially using the \textit{uniform convergence principle} \cite{vapnik2015uniform} and also a bound on the \textit{dual VC dimension} of binary classes having finite VC dimension. These ingredients are combined with a clever application of von Neumann's minimax theorem \cite{v1928theorie} to yield their sample compression scheme. While they are able to use their compression scheme for binary classes in a blackbox manner to derive compression schemes for certain multiclass hypothesis classes having finite \textit{graph dimension}, the graph dimension does not characterize multiclass learnability (more on this in \Cref{subsec:graph-dim-ub}). Instead, we discuss here why their techniques don't translate directly to the multiclass setting in light of the \textit{DS dimension} being the more relevant dimension of interest, as shown by \cite{brukhim2022characterization}.

Firstly, the principle of uniform convergence ceases to hold in the multiclass setting, and the sample complexity of different ERM (Empirical Risk Minimizer) learners can differ by an arbitrarily large factor when the the number of labels is infinite \cite{daniely2015multiclass}. Moreover, the compression scheme of \cite{moran2016sample} crucially makes use of \textit{proper} learners for binary classes i.e., learning algorithms whose output hypotheses always belong to the class. On the other hand, proper learners provably cannot learn multiclass hypothesis classes \cite{daniely2014optimal}! Additionally, for binary classes having VC dimension $d_{\VC}$, the dual VC dimension of the class is bounded above by $2^{d_{\VC}+1}$ \cite{assouad1983densite}. However, in the multiclass setting, the corresponding dual DS dimension may not be bounded above by any finite function of the DS dimension (see \Cref{table:dual-ds} for an illustration)! In particular, we can have every concept in the class use its own set of labels, disjoint from any other concept's labels. If we do this, then it is easy to see that the class cannot DS-shatter any pair of points. However, the dual class can readily DS-shatter arbitrarily large sets.

\begin{table}[H]
    \centering
    \begin{tabular}{ |c|c|c|c|c|c|c|c| } 
     \hline
     \backslashbox{$\barH$}{$\mcX$}& $x_1$ & $x_2$ & $x_3$ & $x_4$ & $x_5$ & $x_6$ & \dots \\
     \hline
     $\barh_1$ & 1 & 1 & 1 & 1 & 1 & 2 &\dots $\in \{1,2\} $\\
     $\barh_2$ & 3 & 3 & 3 & 3 & 4 & 3 &\dots $\in \{3,4\} $\\
     $\barh_3$ & 5 & 5 & 5 & 6 & 5 & 5 &\dots $\in \{5,6\} $\\
     $\barh_4$ & 7 & 7 & 8 & 7 & 7 & 7 &\dots $\in \{7,8\} $\\
     $\barh_5$ & 9 & 10 & 9 & 9 & 9 & 9 & \dots $\in \{9,10\} $\\
     \hline
    \end{tabular}
    \caption{Every hypothesis in $\barH$ uses its own unique set of two labels. Thus, no pair $(x_j, x_k)$ can be DS-shattered by $\barH$. However, notice how the dual class (wherein $\barh_j$'s (rows) become the input domain, and $x_j$'s (columns) form the hypothesis class) can easily realize $i$-neighbors (as in \Cref{def:ds-dim}). In particular, we can see that the dual hypotheses corresponding to $x_2,x_3,x_4,x_5,x_6$ each disagree with the dual hypothesis corresponding to $x_1$ at exactly one of $\barh_1,\barh_2,\barh_3,\barh_4, \barh_5$. We can easily add the remaining dual hypotheses (i.e., columns) to complete the DS-shattering of $\{\barh_1,\barh_2,\barh_3,\barh_4,\barh_5\}$. Similarly, we can even add new rows, each with its own set of labels, and realize arbitrarily large dual DS dimension.}
    \label{table:dual-ds}
\end{table}

Due to these reasons, it seems apparent that the techniques from \cite{moran2016sample} don't port over to the multiclass setting, at least in a straightforward manner. In fact, our main result (\Cref{thm:compression-lb}), which we will now proceed towards proving, shows that this pursuit of constructing a bounded-DS dimension sample compression scheme in the multiclass setting is indeed fruitless. 

\section{Lower Bound via Disambiguation}
\label{sec:lb}

The main ingredient we use to establish \Cref{thm:compression-lb} is the following result from the work of \cite{alon2022theory}, which refutes the sample compression conjecture for binary \textit{partial} concept classes. While \cite{alon2022theory} stated their result only for $\mcY = \{0,1\}$, we can freely think of the label set in their construction to be $\mcY = \{0,1,2,\dots\}$ instead, where we don't use the extra labels available at any point. Walking through their proof pointwise then already gives a lower bound for sample compression in (multiclass) partial concept classes. Moreover, as mentioned above, if we only ever use $\{0,1\}$ labels, DS-shattering is equivalent to VC-shattering. With these considerations, we can state the result from \cite{alon2022theory} in the following form (for completeness, we provide a proof in \Cref{sec:pcc-compression-lb}):

\begin{lemma}[Theorem 7 in \cite{alon2022theory}]
    \label{lem:pcc-lower-bound}
    There exists a partial concept class $\mcH \subseteq \{ \mcY \cup \{\star\}\}^\mcX$ where $\mcY = \{0,1,2,\dots\}$ that has the following properties:
    \begin{enumerate}
        \item $\forall h \in \mcH,~ \forall x \in \mcX,~ h(x) \in \{ 0,1,\star \}$,
        \item $\mcH = \bigcup_{n =1}^\infty H_n$ where each $\mcH_n \subseteq \{ \mcY \cup \{\star\}\}^\mcX$,
        \item $|\mcH_n|, |\supp(\mcH_n)| < \infty$ for every $n$,
        \item $\supp(\mcH_n) \cap \supp(\mcH_m) = \emptyset$ for every $n \neq m$,
        \item $d_{\DS}(\mcH_n) = 1$ for every $n$,
        \item $d_{\DS}(\mcH)=1$,
    \end{enumerate}
    and additionally satisfies: any sample compression scheme for $\mcH$ must have size $\Omega((\log(m))^{1-o(1)})$, where $m$ is the size of an input sequence realizable by $\mcH$, and the $o(1)$ term goes to 0 as $m \to \infty$. In particular, there does not exist a sample compression scheme for $\mcH$ having a size that is a finite function solely of the DS dimension of $\mcH$.
\end{lemma}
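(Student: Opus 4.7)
The plan is to invoke the AHHM construction essentially verbatim, reinterpreting the binary labels $\{0,1\}$ as a subset of the multiclass label set $\mcY=\{0,1,2,\dots\}$. By the footnote attached to \Cref{def:ds-dim}, DS-shattering coincides with VC-shattering on $\{0,1\}$-valued partial concepts, so a partial binary class with VC dimension 1 automatically gives a multiclass partial class with DS dimension 1. Consequently, properties (1), (5), and (6) of the lemma will fall out directly from AHHM's construction, and properties (2)--(4) will follow from how the pieces are assembled.

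The heart of the argument is the existence, for each $n$, of a finite partial binary concept class $\mcH_n$ on a finite support, with VC dimension (equivalently DS dimension) $1$, such that any sample compression scheme restricted to $\mcH_n$-realizable samples of some size $m_n$ must have size at least $(\log m_n)^{1-o(1)}$. I would establish this via a counting argument: a compression scheme of size $k$ that compresses samples of size $m$ can reconstruct at most $m^{O(k)}$ distinct total concepts on the sample (there are at most $\binom{m}{k}$ choices of subsample, together with a short side-information string of length $k$). On the other hand, one must exhibit many distinct realizable samples in $\mcH_n$ that require pairwise distinct reconstructions; matching these two quantities yields the claimed bound on $k$. The VC dimension bound comes essentially for free from how sparsely each concept in $\mcH_n$ is supported (concepts are defined on so few points that no two points can realize all four binary patterns).

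With the $\mcH_n$ in hand, I would set $\mcH=\bigcup_{n\geq 1}\mcH_n$, placing each $\mcH_n$ on a disjoint copy of its support. This disjointness immediately gives property (4), and yields properties (2) and (3) from the finiteness of each $\mcH_n$. Crucially, it also ensures $d_{\DS}(\mcH)=1$: any shattered sequence would have to lie entirely within a single $\supp(\mcH_n)$, since two points from different $\mcH_n$'s cannot even have simultaneously non-$\star$ labels under a single $h\in\mcH$. For the compression lower bound, any sample compression scheme for $\mcH$ of size $k(m)$ induces, by restricting inputs to $\supp(\mcH_n)$, a scheme of size at most $k(m)$ for $\mcH_n$ on samples of size up to $m$. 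Choosing $m=m_n$ and letting $n\to\infty$ propagates the per-piece lower bound $(\log m_n)^{1-o(1)}$ into the claim for $\mcH$.

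The main obstacle is the construction and incompressibility analysis of the pieces $\mcH_n$ themselves; this is where the AHHM argument does real work, relying on a carefully engineered combinatorial family (rather than a naive probabilistic one) so that VC dimension 1 is preserved while the number of distinct realizable sample-labelings grows fast enough to force $k=\Omega((\log m)^{1-o(1)})$. Once these pieces and the counting lower bound are in place, the assembly step and the DS-dimension bookkeeping are essentially syntactic, and the multiclass reinterpretation is purely notational, since no label outside $\{0,1\}$ is ever actually used.
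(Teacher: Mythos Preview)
Your outline is correct and follows the paper's route (which is the Alon--Hanneke--Holzman--Moran argument verbatim): each $\mcH_n$ is built from a graph $G_n$ witnessing the Balodis et al.\ gap in the Alon--Saks--Seymour problem, a size-$k$ compression scheme on $\supp(\mcH_n)$ yields a disambiguation of $\mcH_n$ of size $n^{O(k)}$ by exactly your counting argument, and any disambiguation of $\mcH_n$ induces a proper coloring of $G_n$, forcing $n^{O(k)}\geq \chi(G_n)\geq n^{(\log n)^{1-o(1)}}$; the assembly into $\mcH$ on disjoint supports is as you describe.

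One correction, however: your stated reason for $d_{\VC}(\mcH_n)=1$ is wrong. Sparsity of individual supports does not by itself prevent a pair from being shattered --- four concepts each supported only on $\{x_1,x_2\}$ with values $(0,0),(0,1),(1,0),(1,1)$ already shatter it. The actual argument is structural to the biclique-partition encoding: if some $h_u$ realizes $(0,0)$ on a pair $(i,j)$ and some $h_v$ realizes $(1,1)$, then $u\in L_i\cap L_j$ and $v\in R_i\cap R_j$, so the edge $(u,v)$ belongs to both $B_i$ and $B_j$, contradicting that the bicliques \emph{partition} the edge set of $G_n$. This is the one spot where the specific combinatorics of the construction, rather than generic sparsity, is what pins the dimension at $1$; and it is worth naming explicitly that the ``carefully engineered combinatorial family'' you allude to is precisely the biclique-partition-vs.-chromatic-number construction, since that external input is the entire source of the quantitative lower bound.
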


Now, we make a simple observation: as a consequence of the definition of disambiguation (\Cref{def:disambiguation}), sequences realizable by a disambiguating total class are necessarily a superset of the sequences realizable by the corresponding partial class. This leads to the following proposition, whose proof is immediate from the definitions of sample compression and disambiguation.
\begin{proposition}[Compression monotonic in disambiguation]
    \label{prop:compression-size-is-monotonic}
    Let $\mcH \subseteq \{ \mcY \cup \{\star\}\}^\mcX$ be a partial concept class and $\barH \subseteq \mcY^\mcX$ be a total concept class that disambiguates $\mcH$. Then, if there exists a sample compression scheme $(\kappa, \rho)$ for $\overline{\mcH}$ of size $k$, then $(\kappa, \rho)$ is also a sample compression scheme for $\mcH$.
\end{proposition}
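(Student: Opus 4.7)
The plan is to unpack the definitions directly; this proposition is essentially a tautology once one observes that disambiguation upgrades every $\mcH$-realizable sample into an $\barH$-realizable sample. Concretely, I would need to verify two things about viewing the scheme $(\kappa, \rho)$ as a sample compression scheme for $\mcH$: (i) that the size bound $k$ still applies, and (ii) that correct reconstruction holds on every sample realizable by $\mcH$.

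For (i), I would observe that the size $k(m)$ defined in \Cref{def:sample-compression} is a property of the compression function $\kappa$ alone, being the maximum of $\max(|S'|, |B|)$ over \emph{all} sequences $S \in (\mcX \times \mcY)^m$, entirely independent of which concept class we are trying to compress for. Hence the same $\kappa$ immediately induces the same size bound regardless of whether we regard $(\kappa, \rho)$ as a compression scheme for $\barH$ or for $\mcH$.

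For (ii), let $S = \{(x_1,y_1),\dots,(x_m,y_m)\}$ be any finite labeled sequence realizable by $\mcH$. By \Cref{def:pcc}, there exists $h \in \mcH$ with $h(x_i) = y_i$ (and in particular $h(x_i) \neq \star$) for every $i \in [m]$. Since $\barH$ disambiguates $\mcH$, \Cref{def:disambiguation} supplies some $\barh \in \barH$ with $\barh(x_i) = h(x_i) = y_i$ for all $i \in [m]$, so $S$ is also realizable by $\barH$. Invoking the sample compression property of $(\kappa, \rho)$ for $\barH$ then yields $\rho(\kappa(S))(x_i) = y_i$ for all $i \in [m]$, which is exactly the reconstruction condition needed for $\mcH$. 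This completes the verification.

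There is no real obstacle here; the only thing one must be careful about is to check that the definition of size in \Cref{def:sample-compression} is class-independent (which it is, as noted above) so that the size bound $k$ transfers intact. Conceptually, the content of the proposition is precisely that disambiguation enlarges the set of realizable samples, and a compression scheme that handles the larger set automatically handles the smaller one.
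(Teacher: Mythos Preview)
Your proposal is correct and matches the paper's approach: the paper itself simply states that the proof is immediate from the definitions of sample compression and disambiguation, and your argument is precisely that unpacking. Your explicit observation that the size function $k(m)$ in \Cref{def:sample-compression} is class-independent is a useful clarification, but otherwise this is exactly the intended one-line verification.
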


There is now a natural strategy to prove a sample compression lower bound for total concept classes: find a partial concept class of small DS dimension that is hard to compress to a small size. Then, construct a \textit{disambiguation} of this class to a total concept class. Given the proposition above, the disambiguating class would be at least as hard to compress as the partial class. However, we would want the DS dimension of the disambiguating class to also be small, in order for the lower bound to be meaningful.

The first ingredient in the above strategy is already available to us --- choose the partial class $\mcH$ given by \Cref{lem:pcc-lower-bound} that has DS dimension 1. The crucial task that remains is constructing a disambiguation $\barH$ of $\mcH$ that also has small DS dimension. But given the power of conjuring new labels at will in the multiclass setting, this is not all too hard --- we can construct a disambiguating $\barH$ that also has DS dimension 1 in a straightforward manner. We disambiguate each partial concept $h \in \mcH$ with a total concept $\barh$ that assigns the $\star$'s in $h$ a unique label which is never again used by any other disambiguating concept. This preserves the DS dimension of the disambiguating class.

\begin{lemma}[Disambiguation with no DS blow-up]
    \label{lem:disambiguation}
    There exists a total concept class $\barH \subseteq \mcY^\mcX$ where $\mcY = \{0,1,2,\dots\}$ such that $\barH$ disambiguates $\mcH$ from \Cref{lem:pcc-lower-bound} and satisfies $d_{\DS}(\barH)=1$.
\end{lemma}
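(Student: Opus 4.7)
The strategy sketched right before the lemma is to extend each partial concept $h \in \mcH$ by giving its $\star$-coordinates private labels unused by any other concept. I would formalize this as follows. Because each $\mcH_n$ is finite, both $\mcH = \bigcup_n \mcH_n$ and $\supp(\mcH) = \bigcup_n \supp(\mcH_n)$ are countable, so I may enumerate $\mcH = \{h_1, h_2, \dots\}$ and choose distinct labels $y_{i,x} \in \mcY \setminus \{0,1\}$, one per pair $(i, x)$ with $x \in \supp(\mcH) \setminus \supp(h_i)$. Then I define
\[
\barh_i(x) = \begin{cases} h_i(x) & \text{if } x \in \supp(h_i), \\ y_{i,x} & \text{if } x \in \supp(\mcH) \setminus \supp(h_i), \\ 0 & \text{if } x \in \mcX \setminus \supp(\mcH), \end{cases}
\]
and set $\barH = \{\barh_i\}_{i \geq 1}$. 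Since any sample realizable by $\mcH$ lies in $\supp(h_i)$ for some $i$, and $\barh_i = h_i$ there, $\barH$ disambiguates $\mcH$ in the sense of \Cref{def:disambiguation}.

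\textbf{DS-dimension upper bound.} I would argue $d_{\DS}(\barH) \leq 1$ by contradiction: assume a finite $\mcF \subseteq \barH$ DS-shatters $S = \{x_1, x_2\}$. First rule out $\{x_1,x_2\} \not\subseteq \supp(\mcH)$: if some $x_j \notin \supp(\mcH)$, every $\barh \in \barH$ labels $x_j$ by $0$, blocking the existence of a $j$-neighbor for any member of $\mcF|_S$. So both points lie in $\supp(\mcH)$. Now pick any $\barh_i \in \mcF$ with a $1$-neighbor $\barh_j \in \mcF$. The agreement $\barh_i(x_2) = \barh_j(x_2)$ cannot be on a private label $y_{i, x_2}$, since those are unique to index $i$ and would force $\barh_j = \barh_i$; hence the common value lies in $\{0,1\}$, and therefore $x_2 \in \supp(h_i)$. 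The $2$-neighbor condition symmetrically yields $x_1 \in \supp(h_i)$. By the disjointness of the $\supp(\mcH_n)$'s, there is a single $n$ with $\{x_1, x_2\} \subseteq \supp(\mcH_n)$, and every $h_i$ underlying a member of $\mcF$ must lie in this particular $\mcH_n$.

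\textbf{Pullback to $\mcH_n$.} Set $\mcF' = \{h_i : \barh_i \in \mcF\} \subseteq \mcH_n$. Each $h \in \mcF'$ satisfies $h(x_1), h(x_2) \in \{0,1\}$ and $h|_S = \barh|_S$, so the $i$-neighbor relations on $\mcF|_S$ transfer verbatim to $\mcF'|_S$. This exhibits $\mcF'$ as a witness for $\mcH_n$ DS-shattering $S$, contradicting $d_{\DS}(\mcH_n) = 1$.

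\textbf{Main obstacle.} The only delicate step is the choice of labels off each $\supp(h_i)$: had I used a common value (say $0$ everywhere outside the support), concepts from different $\mcH_n$'s could spuriously become $i$-neighbors through agreement on a label that carries no information about the original partial class, and $d_{\DS}(\barH)$ could blow up. The private-label scheme is precisely what forces every would-be DS-shattering witness for $\barH$ to collapse inside a single $\mcH_n$, reducing the problem to the already-bounded DS dimension of $\mcH_n$.
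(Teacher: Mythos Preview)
Your proposal is correct and follows the same core idea as the paper: give each partial concept's $\star$-positions labels that no other concept ever uses, so that any agreement between two distinct $\barh_i,\barh_j$ at a point forces both to take values in $\{0,1\}$ there, collapsing the DS-shattering witness back to one for the partial class. The differences are cosmetic: the paper assigns a single fresh label $i+1$ to \emph{all} of $h_i$'s $\star$-positions (rather than one per pair $(i,x)$, and with no separate case for $x\notin\supp(\mcH)$), argues directly for arbitrary $d$ (whereas you treat only $|S|=2$, implicitly relying on the standard monotonicity of DS-shattering under passing to subsets), and concludes via $d_{\DS}(\mcH)=1$ rather than routing through a single $\mcH_n$ and the disjointness of the supports.
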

\begin{proof}
    Recall that $\mcH = \bigcup_{n =1}^\infty \mcH_n$, where each $|\mcH_n| < \infty$. This means that $\mcH$ is countably large. Let $h_1,h_2,h_3,\dots$ be an enumeration of all the concepts in $\mcH$. Then, for each $i\in\{1,2,\dots\}$, define the total concept $\barh_i:\mcX \to \mcY$ as follows
    \begin{align*}
        \barh_i(x) = \begin{cases}
            h_i(x) & \text{if } h_i(x) \neq \star, \\
            i+1 & \text{otherwise.}
        \end{cases}
    \end{align*}
    Consider the total concept class $\barH = \bigcup_{i=1}^\infty \barh_i$. By construction, $\barH$ disambiguates $\mcH$. Further, any sequence DS-shattered by the partial class is certainly also DS-shattered by $\barH$, and hence $d_{\DS}(\barH)\ge 1$. Now, let $S= \{x_1,\dots,x_d\}$ be any sequence that is DS-shattered by $\barH$. Then, according to \Cref{def:ds-dim} above, let $\barF = \{\barf_1,\dots\barf_m\} \subseteq \barH$ be the finite subset of $\barH$ that realizes this shattering. Namely, if we think of all the distinct patterns $\barf_1|_S,\barf_2|_S,\dots,\barf_m|_S$ that $\barF$ realizes on $S$, then every pattern $\barf_i|_S$ has a neighbor $\barf_j|_S$ in every direction $l \in [d]$ ($\barf_i|_S$ and $\barf_j|_S$ are the same everywhere but at index $l$). There can be two cases: either every string $\barf_i|_S$ is such that $\barf_i|_S \in \{0,1\}^d$. But this would mean that the partial concepts $\{f_1,\dots,f_d\}$ themselves realize the DS-shattering of $S$, implying that $d\le 1$. In the other case, there is some $\barf_i$ which satisfies $\barf_i(x_l) \in \{2,3,\dots\}$ for some $l \in [d]$. But now observe that no other function in $\barH$ other than $\barf_i$ attains the label $\barf_i(x_l)$ on $x_l$ --- this is merely an artefact of our construction of $\barH$. In particular, this means that $\barH$ cannot realize a neighbor for $\barf_i|_S$ in any direction other than $l$, meaning also that $d \le 1$. This completes the proof that $d_{\DS}(\barH) \le 1$.
\end{proof}

\begin{remark}
    \label{rem:proof-of-lb}
    \Cref{lem:disambiguation}, \Cref{prop:compression-size-is-monotonic} and \Cref{lem:pcc-lower-bound} together imply \Cref{thm:compression-lb}.
\end{remark}

\section{Discussion}
\label{sec:discussion}

We showed that unlike the binary setting, compression and learnability are not equivalent in the multiclass learning setting. Namely, if we are allowed infinite labels, it is possible that a hypothesis class is learnable, but the size of a compressed sample must necessarily scale with the size of the original sample, and cannot be independent of it. Our result illustrates a separation between the paradigms of compression and learnability in the binary and multiclass settings. In the following, we discuss the relevance of our result in the context of a past result on multiclass compression by \cite{moran2016sample}, and also discuss why the disambiguation technique from above does not work in order to prove lower bounds for sample compression when the disambiguating class is only allowed to use finitely many labels.

\subsection{Upper Bound from \texorpdfstring{\cite{moran2016sample}}{ub} in Terms of Graph Dimension}
\label{subsec:graph-dim-ub}

As mentioned above, the seminal work of \cite{moran2016sample} answered the question ``does learnability imply compression?" in the affirmative for binary hypothesis classes. Namely, for any binary hypothesis class of VC dimension $d_{\VC}$, \cite{moran2016sample} construct a sample compression scheme of size $2^{O(d_{\VC})}$. In fact, there is a nice reduction (outlined in \Cref{sec:graph-dim-compression} for completeness) from the multiclass setting to the binary setting that allows them to use their compression scheme as is, and obtain a sample compression scheme of size $2^{O(d_{G})}$ for any (multiclass) hypothesis class, where $d_G$ is the \textit{graph dimension} of the class. The graph dimension $d_G$ is defined as follows:

\begin{definition}[Graph dimension \cite{natarajan1989learning}]
    \label{def:graph-dim}
    Let $\barH \subseteq \mcY^\mcX$ be a hypothesis class and let $S \in \mcX^d$ be a sequence. We say that $\barH$ G-shatters $S$ if there exists an $\barf \in \barH$ (which realizes the G-shattering), such that for every subsequence $T \subseteq S$, there exists an $\barh \in \barH$ such that
    \begin{align*}
        \forall x \in T,~ \barh(x)=\barf(x), \; \text{and } \forall x \in S \setminus T,~ \barh(x) \neq \barf(x).
    \end{align*}
    The size of the largest sequence that $\barH$ G-shatters\footnote{Note that just like DS-shattering, G-shattering is also equivalent to VC-shattering when $\mcY = \{0,1\}$.} is called the graph dimension of $\barH$, denoted as $d_G(\barH)$.
\end{definition}
 However, even if it is the case that a hypothesis class always permits a sample compression scheme of size $2^{O(d_{G})}$, the graph dimension $d_G$ need not necessarily be finite for a learnable hypothesis class when the label space is allowed to be infinitely large. In particular, \cite{daniely2014optimal} constructed a learnable hypothesis class $\barH$ that has $d_G(\barH)=\infty$. This is precisely why the compression scheme of \cite{moran2016sample} in terms of the graph dimension does not allow them to immediately conclude that ``learnability implies compression" in the multiclass case. Furthermore, our lower bound does not contradict their upper bound. This is because the disambiguation we construct in \Cref{lem:disambiguation} above only preserves the DS dimension\footnote{and also, the Natarajan dimension \cite{natarajan1989learning}, which is also required to be finite for PAC learnability.}. On the other hand, the graph dimension of the disambiguating class can (and must) increase arbitrarily, so that the $2^{O(d_{G})}$ bound is still a valid (but not meaningful) upper bound on the size of the compression scheme. Here is a simple example to see why this might be the case: fix a sequence $S=\{x_1,x_2,x_3\}$, and say that some $h$ in the partial class $\mcH$ realizes the pattern $(0,0,0)$ on this sequence. Say also that the set of distinct patterns that is realized on $S$ by the rest of the partial concepts in $\mcH$ is $(\star,0,0), (0,\star,0), (0,0,\star), (\star,\star,0), (\star,0,\star), (0,\star,\star), (\star,\star,\star)$. Observe that this sequence is not remotely DS/VC-shattered by the partial class. Now, let us think of the patterns that the disambiguating class realizes on this sequence. Since each total concept in the disambiguating class labels the $\star$'s in the partial concept it represents with a distinct number, the patterns on $S$ realized by $\barH$ would be something like:
\begin{alignat*}{6}
    &(0,0,0) &&\longrightarrow\; && (0,0,0) &&\qquad\qquad\qquad(\star,\star,0) &&\longrightarrow\; && (20,20,0) \\
    &(\star,0,0) &&\longrightarrow\; && (3,0,0) &&\qquad\qquad\qquad(\star,0, \star) &&\longrightarrow\; && (39,0,39) \\
    &(0,\star,0) &&\longrightarrow\; && (0,7,0) &&\qquad\qquad\qquad (0,\star,\star) &&\longrightarrow\; && (0,53,53) \\
    &(0,0,\star) &&\longrightarrow\; && (0,0,11) &&\qquad\qquad\qquad(\star,\star,\star) &&\longrightarrow\; && (100, 100, 100).
\end{alignat*}
We can readily see that $S$ is G-shattered by $\barH$ ($\barh$ realizes the G-shattering). This phenomenon must indeed be occurring at a larger scale --- arbitrarily large sequences must be getting G-shattered by $\barH$ in the manner illustrated above, so as to ensure that the $2^{O(d_{G})}$ upper bound does not contradict our lower bound from \Cref{thm:compression-lb}.

\subsection{Upper/Lower Bounds in Terms of Natarajan Dimension for Finite Labels}
\label{subsec:natarajan-dim-ub-lb}
In a sense, the DS dimension really only comes into picture when dealing with hypothesis classes that have an infinite label space. When the label space of the hypothesis class is finite, i.e., $|\mcY|=c<\infty$, learnability of the class is completely characterized (e.g., \cite[Theorem 4]{daniely2015multiclass}) by another folklore quantity called the \textit{Natarajan dimension} $d_N$. The Natarajan dimension of a class must unconditionally be finite for learnability (irrespective of finitely many/infinite labels); however, its finiteness is sufficient for learnability only if the label space is finite. The Natarajan dimension is defined as follows:

\begin{definition}[Natarajan dimension \cite{natarajan1989learning}]
    \label{def:natarajan-dim}
    Let $\barH \subseteq \mcY^\mcX$ be a hypothesis class and let $S \in \mcX^d$ be a sequence. We say that $\barH$ N-shatters $S$ if there exist $\barf_1, \barf_2 \in \barH$ (which realize the $N$-shattering), such that $\forall x \in S,~ f_1(x) \neq f_2(x)$, and further, for every subsequence $T \subseteq S$, there exists an $\barh \in \barH$ such that
    \begin{align*}
        \forall x \in T,~ \barh(x)=\barf_1(x), \; \text{and } \forall x \in S \setminus T,~ \barh(x) = \barf_2(x).
    \end{align*}
    The size of the largest sequence that $\barH$ N-shatters\footnote{Again, N-shattering is equivalent to VC-shattering when $\mcY = \{0,1\}$.} is called the Natarajan dimension of $\barH$, denoted as $d_N(\barH)$.
\end{definition}

Observe that a sequence that is N-shattered by $\barH$ is also G-shattered by it, implying $d_N \le d_G$. Additionally, due to a result by \cite{bendavid1995characterizations}, the graph dimension can also be upper-bounded in terms of the Natarajan dimension and the number of classes $c$, as
\begin{align}
    d_N \le d_G \le O(d_N\cdot\log(c)).
\end{align}
This relation, combined with the sample compression scheme due to \cite{moran2016sample} above, immediately implies a sample compression scheme of size $c^{O(d_N)}$ for any hypothesis class on $c$ classes having Natarajan dimension $d_N$. Since finiteness of the Natarajan dimension is a necessary condition for leanability, we conclude that compression and learnability are in fact equivalent in the multiclass setting \textit{when the number of labels is finite}.

As for lower bounds, a straightforward lower bound can be obtained by a counting argument, similar to \cite[Theorem 14]{floyd1995sample}. Concretely, let $\barH \subseteq \mcY^\mcX$ such that $|\mcY|=c$ and $|\mcX|=m$. For any sample compression scheme of size $k$, the number of distinct ``compression sets'' possible are at most $\sum_{i=0}^k\binom{m}{i}\cdot c^i \cdot 2^{\Theta(k)}$ (choose $i$ distinct elements from $\mcX$, label it in one of at most $c^i$ possible ways, and append an additional bit string of size at most $\Theta(k)$ to it), which is at most $\left(\frac{cme}{k}\right)^{\Theta(k)}$. If we now think of compressing the entire domain of each hypothesis in the class, each of these compression sets should point to a distinct hypothesis in the class, and hence there should at least be one compression set for every hypothesis in the class. Consequently, if the size of $\barH$ were to be large, while keeping its Natarajan dimension bounded, we would get a lower bound on $k$. As \cite[Theorem 4]{haussler1995generalization} show, for any $c, d_N \le m$, one can construct $\barH \subseteq \mcY^\mcX$ having Natarajan dimension $d_N, |\mcY|=c$ and $|\mcX|=m$ such that
\begin{align*}
    |\barH| = \sum_{i=0}^{d_N}\binom{m}{i}(c-1)^i.
\end{align*}
If we set $m=d_N$, we get $|\barH|=c^{d_N}$. Recalling the upper bound on the number of compression sets from before, we can conclude that if $k < C\cdot d_N$ for some absolute constant $C$, the number of possible compression sets will be smaller than $|\barH|$. In summary, the size $k$ of any valid sample compression must satisfy
\begin{align*}
\Omega(d_N) \le k \le c^{O(d_N)}.
\end{align*}
Note that setting $c=2$ (which also makes $d_N=d_{\VC}$) recovers the longstanding unsettled exponential gap between upper and lower bounds in the size of compression schemes in the binary case. For larger but constant $c$, we are morally faced with the same unsettled exponential gap, where the Natarajan dimension replaces the VC dimension. Perhaps unsettling is the regime where we think of $d_N$ as constant. In this case, note that the lower bound is $\Omega(1)$ and does not even depend on the number of classes $c$, whereas the upper bound is $\poly(c)$. It seems plausible that the lower bound on the compression size should grow with the number of classes.
\begin{openproblem}
    \label{openprob:multiclass-lb-finite-classes}
    Let $k$ be the (unqualified) size of any valid sample compression scheme for a hypothesis class $\barH \subseteq \mcY^\mcX$ having Natarajan dimension $d_N=\Theta(1)$, where $|\mcY|=c<\infty$. Is $k = \omega(1)$ with respect to $c$? Is $k=\Omega(\polylog(c))$? Is $k=\Omega(c^\delta)$ for some $\delta > 0$?
\end{openproblem}

\subsection{Disambiguation Using Only Finitely Many Labels}
\label{subsec:disambiguation-using-finite-labels}
In the disambiguation that we constructed above, we crucially used the power of \textit{infinite} labels available to us. In fact, using infinite labels is necessary for this proof technique. If instead, we only considered disambiguations that label $\star$'s with one of $c<\infty$ labels, we cannot hope to preserve learnability of the disambiguating class. For $c=2$, this is immediate from Theorem 1 in \cite{alon2022theory}, which says that any binary (total) concept class disambiguating the partial class from \Cref{lem:disambiguation} \textit{must} have infinite VC dimension. However, even for $c>2$ but finite, this approach will not work. This is crucially because any total class $\barH$ disambiguating the partial class $\mcH$ from \Cref{lem:pcc-lower-bound} also disambiguates each of the $\mcH_n$s individually for increasing $n$. We can then instantiate \Cref{lem:disambiguation-pcc-large} in \Cref{subsec:pcc-hard-to-disambiguate} and the contrapositive of the \textit{multiclass} version of the Sauer-Shelah-Perles lemma \cite{haussler1995generalization} to conclude that $d_N(\barH)=\infty$. Thus, this approach with finite-label disambiguators will only let us derive a lower bound on the compression size for what has become an \textit{unlearnable} class, a not-so-interesting result. In contrast, and perhaps intriguingly, disambiguating with \textit{infinite} labels allows us to retain the learnability of the disambiguating class, while also inheriting the lower bound on the compression size from the underlying partial class.

\section*{Acknowledgements}
I am deeply grateful to Moses Charikar for his discussions and valuable feedback on the manuscript. This work is supported by Moses' Simons Investigator Award, and NSF awards 1704417 and 1813049.
\bibliographystyle{alpha}
\bibliography{references}

\appendix
\section{Proof of \texorpdfstring{\Cref{lem:pcc-lower-bound}}{ll}}
\label{sec:pcc-compression-lb}
In this section, we essentially rewrite the proof by \cite{alon2022theory}, which is a beautiful reduction from a recent breakthrough result by \cite{balodis2022unambiguous}, which in turn builds upon the works of \cite{ben2017low, goos2015lower} to show a nearly tight bound for the Alon-Saks-Seymour problem \cite{kahn1991recent, bousquet2014clique}.\\ \cite{alon2022theory} translate the result by \cite{balodis2022unambiguous} into the construction of a partial concept class that is hard to disambiguate with a small total class, and also consequently hard to compress.

\subsection{A Partial Class That is Hard to Disambiguate}
\label{subsec:pcc-hard-to-disambiguate}
 We recall some concepts from graph theory. Given a graph $G=(V,E)$, the chromatic number $\chi(G)$ of $G$ is the minimum number of colors required, such that each vertex can be assigned a color in a way that no two vertices connected by an edge have the same color assigned to them. The biclique partition number $\BP(G)$ is the minimum number of complete bipartite graphs required to successfully partition the edge set $E$ of $G$. Each complete bipartite graph in the decomposition consists of all the vertices in $V$ (some possibly isolated, but the rest forming a complete bipartite graph) and a subset of the edges in $E$. \cite{balodis2022unambiguous} proved the following result relating these two quantities, in response to a problem originally posed by Alon, Saks and Seymour \cite{kahn1991recent}:

\begin{theorem}[Corollary 3 in \cite{balodis2022unambiguous}]
    \label{thm:chromatic-number-biclique-partition-number}
    For every $n$, there exists a finite simple graph $G=(V, E)$ with $\BP(G)=n$ such that
    \begin{align*}
        \chi(G) \ge n^{(\log(n))^{1-o(1)}},
    \end{align*}
    where the $o(1)$ term goes to $0$ as $n \to \infty$.
\end{theorem}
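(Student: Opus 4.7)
The plan is to reduce this separation to a known near-quadratic separation in \emph{communication complexity}, and then to translate back through a classical dictionary. For any Boolean function $f: X \times Y \to \{0,1\}$ one considers two complexity measures of the associated communication game: the deterministic complexity $D^{cc}(f)$ and the unambiguous non-deterministic complexity $UP^{cc}(f)$. A construction implicit in Yannakakis' work and developed explicitly in \cite{goos2015lower, balodis2022unambiguous} builds, from any function $f$, a finite simple graph $G_f$ satisfying $\log_2 \BP(G_f) = O(UP^{cc}(f))$ and $\log_2 \chi(G_f) = \Omega(D^{cc}(f))$. Intuitively, an unambiguous $1$-rectangle partition contributes one biclique per rectangle, while a short deterministic protocol partitions the communication matrix into few monochromatic rectangles, yielding a small proper coloring of $G_f$. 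The desired graph separation thus reduces to exhibiting a function $f$ where $D^{cc}(f)$ is much larger than $UP^{cc}(f)$.

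The required function is supplied by the main result of \cite{balodis2022unambiguous}, which strengthens G\"o\"os's breakthrough separation between deterministic and unambiguous communication complexity \cite{goos2015lower} using the lifting toolkit of \cite{ben2017low}. That result exhibits, for every $N$, a Boolean function $f_N$ on $N$-bit inputs with $UP^{cc}(f_N) = O(\log N)$ but $D^{cc}(f_N) \geq (\log N)^{2 - o(1)}$. Setting $n := 2^{UP^{cc}(f_N)} = O(N)$, the graph $G := G_{f_N}$ then satisfies $\BP(G) \leq n$ and $\chi(G) \geq 2^{(\log n)^{2 - o(1)}} = n^{(\log n)^{1 - o(1)}}$, which is the desired bound. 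A final padding step (adding isolated edges, each of which is a trivial biclique contributing $1$ to $\BP$ without changing $\chi$) adjusts $\BP(G)$ to exactly $n$, delivering the theorem as stated.

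The main obstacle is the communication complexity input from \cite{balodis2022unambiguous}, which I would treat as a black box. Its proof lifts a query-complexity separation between unambiguous certificate complexity and deterministic decision-tree complexity up to the communication setting via a carefully chosen gadget, and the combinatorial analysis of rectangles in the lifted matrix is delicate; reproving it from scratch is well beyond the scope of this sketch. Once that separation is in hand, however, the graph translation described above is essentially mechanical, and the quantitative parameters in the theorem statement fall out by direct bookkeeping.
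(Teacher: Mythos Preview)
The paper does not prove this theorem at all: it is quoted verbatim as Corollary~3 of \cite{balodis2022unambiguous} and used purely as a black box to drive the subsequent construction of the hard-to-disambiguate partial class $\mcH_n$. So there is no ``paper's own proof'' to compare against. Your sketch is a faithful high-level summary of how that cited result is actually obtained in \cite{balodis2022unambiguous}---via the Yannakakis/G\"o\"os dictionary between $(\chi,\BP)$ and $(D^{cc},UP^{cc})$, combined with the lifted near-quadratic query separation---and your decision to treat the lifting theorem itself as a black box is exactly what the present paper does one level up.
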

Now, we describe the clever reduction by \cite{alon2022theory}, who leverage the lower bound result above to construct a partial concept class that is hard to disambiguate with a small total concept class. Given $n$, let $G=(V,E)$ be the graph promised by \Cref{thm:chromatic-number-biclique-partition-number}, and let $B_i=(L_i, R_i, E_i)$ be $n$ complete bipartite graphs (identified with numbers in $[n]$) that witness the partitioning of the edge set of $G$ such that $\BP(G)=n$, and the edge sets $E_i$ are pairwise disjoint. Let $\mcY=\{0,1,2,\dots\}$, and define the multiclass partial concept class $\mcH_n \subseteq \{\mcY \cup \{\star\}\}^{[n]}$ as follows: for each vertex $v \in V$, $\mcH_n$ contains a partial concept $h_v$ such that for each $i \in [n]$,
\begin{equation}
    \label{eqn:pcc-def}
    h_v(i) = \begin{cases}
        0 & \text{if } v \in L_i, \\
        1 & \text{if } v \in R_i, \\
        \star & \text{otherwise}.
    \end{cases}
\end{equation}
Since $\{0,1\}$ are the only non-$\star$ labels in $\mcH_n$, DS-shattering reduces to VC-shattering. We have the following two lemmas:
\begin{lemma}[Lemma 31 in \cite{alon2022theory}]
    \label{lem:vc-pcc-1}
    $\DS(\mcH_n)=1$.
\end{lemma}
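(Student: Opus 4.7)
The plan is to prove both directions of the equality. For the lower bound $\DS(\mcH_n)\ge 1$, I would pick any index $i\in[n]$ and observe that since $B_i$ is a (non-empty) complete bipartite graph witnessing part of the biclique partition, there must be vertices $u\in L_i$ and $v\in R_i$. Then $h_u(i)=0$ and $h_v(i)=1$, and both are defined (non-$\star$), so the singleton $\{i\}$ is DS-shattered (which, since we are only using labels in $\{0,1\}$, is the same as VC-shattering) by the two-element family $\{h_u,h_v\}\subseteq\mcH_n$.

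The substantive direction is $\DS(\mcH_n)\le 1$. Suppose for contradiction that some pair $\{i,j\}\subseteq[n]$ with $i\ne j$ is DS-shattered. Because the only non-$\star$ labels in $\mcH_n$ are $0$ and $1$, this means there exist four vertices $v_{00},v_{01},v_{10},v_{11}\in V$ such that $h_{v_{ab}}(i)=a$ and $h_{v_{ab}}(j)=b$ for every $(a,b)\in\{0,1\}^2$. Unpacking the definition in \eqref{eqn:pcc-def}, this translates into
\[
v_{01}\in L_i\cap R_j\qquad\text{and}\qquad v_{10}\in R_i\cap L_j.
\]
Consequently the unordered pair $\{v_{01},v_{10}\}$ is an edge of the complete bipartite graph $B_i$ (since one endpoint lies in $L_i$ and the other in $R_i$) and \emph{also} an edge of $B_j$ (since one endpoint lies in $L_j$ and the other in $R_j$).

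I would then conclude by invoking the defining property of the biclique partition: the edge sets $E_1,\dots,E_n$ are pairwise disjoint, so no edge can appear in two different $B_i$'s. This contradicts the existence of $\{v_{01},v_{10}\}$ as a common edge of $B_i$ and $B_j$, and hence no such shattered pair can exist, giving $\DS(\mcH_n)\le 1$. Combined with the lower bound, this gives $\DS(\mcH_n)=1$. I do not anticipate any real obstacle here: the proof is a clean combinatorial pigeonhole argument, and the only subtlety is making sure to choose the \emph{diagonal} pair $(v_{01},v_{10})$ rather than $(v_{00},v_{11})$, because it is precisely that diagonal that places one vertex on each side of both bicliques simultaneously and thereby forces the edge-disjointness violation.
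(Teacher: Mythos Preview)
Your proof is correct and follows essentially the same approach as the paper: both argue that DS-shattering a pair $\{i,j\}$ would force a single edge to lie in two distinct bicliques of the partition, contradicting edge-disjointness.

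One small correction to your closing remark: the pair $(v_{00},v_{11})$ works just as well. Indeed, $v_{00}\in L_i\cap L_j$ and $v_{11}\in R_i\cap R_j$, so the edge $\{v_{00},v_{11}\}$ also lies in both $B_i$ and $B_j$. In fact, the paper's own proof uses precisely this pair (it shows the patterns $(0,0)$ and $(1,1)$ cannot coexist), not the anti-diagonal you chose. Either diagonal yields the contradiction; there is no subtlety in the choice.
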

\begin{proof}
    Since there exists at least one edge in $G$, the hypotheses corresponding to the endpoints of this edge shatter a set of size 1, and hence $\DS(\mcH_n) \ge 1$. We will show that for any $i \neq j$, $\mcH_n$ cannot simultaneously realize the patterns $(0,0)$ and $(1,1)$ on $(i,j)$, implying that $\DS(\mcH_n) < 2$. Towards a contradiction, assume that some $h_u$ satisfies $h_u(i)=0$ and $h_u(j)=0$. From $\eqref{eqn:pcc-def}$ above, this means that $u \in L_i$ and $u \in L_j$. Now, assume also that some $h_v$ satisfies $h_v(i)=1$ and $h_v(j)=1$. This means that $v \in R_i$ and $v \in R_j$. But since the bipartite components $B_i$ are complete, this means that the edge $(u, v)$ exists in both $B_i$ and $B_j$, contradicting the disjointedness of the edge sets $E_i$ and $E_j$.
\end{proof}

\begin{lemma}[Lemma 32 in \cite{alon2022theory}]
    \label{lem:disambiguation-pcc-large}
    Let $\barH_n \in \mcY^{[n]}$ be a total concept class that disambiguates $\mcH_n$. Then, $\barH_n$ defines a coloring of $G$ using $|\barH_n|$ colors. Therefore, from \Cref{thm:chromatic-number-biclique-partition-number} above,
    \begin{align*}
        |\barH_n| \ge n^{(\log(n))^{1-o(1)}}.
    \end{align*}
\end{lemma}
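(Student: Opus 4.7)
The plan is to exhibit an explicit proper coloring of $G$ using at most $|\barH_n|$ colors, by letting each vertex choose a disambiguating total concept as its color.

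First I would use the disambiguation hypothesis pointwise: for every vertex $v \in V$, the partial concept $h_v$ is itself realizable by $\mcH_n$, and its support is finite (since $|\supp(\mcH_n)| < \infty$ by construction from the finitely many bicliques $B_1,\dots,B_n$). So the labeled sequence $S_v = \{(i, h_v(i)) : i \in \supp(h_v)\}$ is finite and realizable by $\mcH_n$. By \Cref{def:disambiguation}, there exists some $\barh_v \in \barH_n$ with $\barh_v(i) = h_v(i)$ for every $i \in \supp(h_v)$. Pick one such $\barh_v$ per vertex (axiom of choice on a finite/countable set) and define the coloring $c : V \to \barH_n$ by $c(v) = \barh_v$.

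Next I would verify properness of $c$. Take any edge $(u,v) \in E$. Since $B_1,\dots,B_n$ partition $E$ with pairwise disjoint edge sets, there is exactly one $i \in [n]$ such that $(u,v) \in E_i$. Without loss of generality $u \in L_i$ and $v \in R_i$, so by \eqref{eqn:pcc-def} we have $h_u(i) = 0$ and $h_v(i) = 1$. In particular $i \in \supp(h_u) \cap \supp(h_v)$, so the disambiguation constraint forces $\barh_u(i) = 0 \ne 1 = \barh_v(i)$. Therefore $c(u) \ne c(v)$, i.e., $c$ is a proper coloring. This yields $\chi(G) \le |\mathrm{image}(c)| \le |\barH_n|$.

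Finally I would plug this into \Cref{thm:chromatic-number-biclique-partition-number}: since $\BP(G) = n$ by the choice of $G$ and its biclique decomposition, the theorem gives $\chi(G) \ge n^{(\log n)^{1-o(1)}}$, whence $|\barH_n| \ge \chi(G) \ge n^{(\log n)^{1-o(1)}}$. I do not expect any genuine obstacle here; the only subtle point worth flagging carefully in the write-up is why disambiguation is applicable to each $h_v$ individually (namely that $\supp(h_v)$ is finite, so $S_v$ is a finite realizable sequence to which \Cref{def:disambiguation} directly applies), and why the disjointness of the $E_i$ lets us pick a single coordinate $i$ witnessing the disagreement for each edge.
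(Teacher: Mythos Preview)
Your proposal is correct and follows essentially the same approach as the paper's proof: pick a disambiguating concept $\barh_v$ for each vertex $v$, color $v$ by $\barh_v$, and use that any edge $(u,v)$ lies in a unique biclique $B_i$ to force $\barh_u(i)\neq\barh_v(i)$, whence the coloring is proper and $|\barH_n|\ge\chi(G)$. You are slightly more explicit than the paper in justifying why \Cref{def:disambiguation} applies to each $h_v$ (via finiteness of $\supp(h_v)$), which is a nice touch but not a departure in method.
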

\begin{proof}
    Let $\barh_v \in \barH_n$ be the total concept that disambiguates $h_v \in \mcH_n$ i.e., every sequence realizable by $h_v$ is also realizable by $\barh_v$. Then, we identify the concept $\barh_v$ with a unique color $Id(\barh_v)$, and assign the vertex $v$ this color. This defines a candidate coloring of the vertices of $G$. It remains to argue that no two endpoints of any edge in $G$ are assigned the same color. Indeed, let $(u, v)$ be en edge in $G$. Then, this edge necessarily exists in exactly one of the bipartite components $B_i=(L_i, R_i, E_i)$, meaning that either $h_u(i)=0, h_v(i)=1$ or $h_u(i)=1, h_v(i)=0$. Whatever be the case, $h_u$ and $h_v$ necessarily disagree on $i$, and therefore so do their disambiguators $\barh_u$ and $\barh_v$, implying $Id(\barh_u)\neq Id(\barh_v)$.
\end{proof}

Now, for each $n$, we can instantiate $\mcH_n$ as defined above, each having its own separate domain, and extend the domain of every $\mcH_n$ to the union of the domains as follows: every $h \in \mcH_n$ labels the domain of any other $\mcH_m$ entirely with $\star$. For the domain-extended $\mcH_n$'s thus defined, by construction, we have that $\supp(\mcH_n) \cap \supp(\mcH_m) = \emptyset$ for all $n \neq m$. Furthermore, since each $\mcH_n$ has a support of size $n$ and is based on a finite simple graph, we already have $|\mcH_n|, |\supp(\mcH_n)|<\infty$. Let $\mcH = \bigcup_{n=1}^\infty \mcH_n$. Since any shattered sequence would have to entirely lie in the support of a single $\mcH_n$, by \Cref{lem:vc-pcc-1} above, we also have that $\DS(\mcH)=1$. This justifies all the points describing $\mcH$ in \Cref{thm:compression-lb}.

\subsection{Compression Implies Disambiguation}
\label{subsec:compression-implies-disambiguation}
The following lemma shows that sample compression schemes imply disambiguations of bounded size for partial concept classes.
\begin{lemma}[Proposition 14 in \cite{alon2022theory}]
    \label{lem:compression-implies-disambiguation}
    For $\mcY = \{0,1,2,\dots\}$, let $\mcH \in \{\mcY \cup \{\star\}\}^\mcX$ be such that 
    \begin{enumerate}
        \item $\forall h \in \mcH,~ \forall x \in \mcX,~ h(x) \in \{0,1,\star\}$,
        \item $|\supp(\mcH)| \le n$.
    \end{enumerate}
    Then, if $(\kappa, \rho)$ is a sample compression scheme for $\mcH$ of (unqualified) size $k$, then there exists a disambiguation of $\mcH$ of size at most $n^{\Theta(k)}$.
\end{lemma}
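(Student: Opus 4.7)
The plan is to build the disambiguation $\barH$ by enumerating every total concept that can possibly be produced by the reconstructor $\rho$, and then to bound the cardinality of this set of reconstructions by counting the number of valid compression messages.

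First I would define $\barH$ as the set $\{\rho(S', B) : (S', B) \text{ is a valid compression message for } (\kappa,\rho)\}$. A valid compression message here means any $(S', B)$ of the form $\kappa(S)$ for some sequence $S$ realizable by $\mcH$. By the requirement on $\kappa$, the elements of $S'$ come from $S$, so each pair $(x,y) \in S'$ must satisfy $x \in \supp(\mcH)$ and $y \in \{0,1\}$ (using hypothesis (1) that the non-$\star$ labels of $\mcH$ lie in $\{0,1\}$). Moreover, $|S'| \le k$ and $|B| \le k$ by the definition of the unqualified compression size.

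Next I would verify that $\barH$ really is a disambiguation of $\mcH$. Let $S = \{(x_1,y_1),\dots,(x_m,y_m)\}$ be any finite labeled sequence realizable by $\mcH$. By the defining property of a sample compression scheme, the total concept $\barh \coloneqq \rho(\kappa(S))$ satisfies $\barh(x_i) = y_i$ for all $i \in [m]$. Since $\barh$ appears in the enumeration defining $\barH$, this is exactly the disambiguation condition in \Cref{def:disambiguation}.

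Finally I would upper-bound $|\barH|$ by the number of possible pairs $(S',B)$. Since $S'$ is an ordered sequence of length at most $k$ drawn from the alphabet $\supp(\mcH)\times\{0,1\}$ of size at most $2n$, the number of choices for $S'$ is at most $\sum_{i=0}^{k}(2n)^i \le (2n+1)^k$. The number of bit strings $B$ of length at most $k$ is at most $2^{k+1}$. Multiplying,
\begin{equation*}
|\barH| \le (2n+1)^k \cdot 2^{k+1} = n^{\Theta(k)},
\end{equation*}
as required. The only mildly delicate points are (a) remembering that $S'$ is an ordered sub-sequence so the counting is $(2n)^k$ rather than $\binom{2n}{k}$, and (b) justifying that labels in $S'$ lie in $\{0,1\}$ rather than in all of $\mcY$, which is exactly where hypothesis (1) of the lemma is used; without this restriction the count would blow up with the size of $\mcY$. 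Neither of these presents a real obstacle, so the proof is essentially a clean counting argument packaged around the compression-reconstruction correctness.
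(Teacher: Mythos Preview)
Your proposal is correct and follows essentially the same approach as the paper: enumerate all possible compressed messages $(S',B)$, take $\barH$ to be the image of $\rho$ on these, verify disambiguation via the compression--reconstruction correctness, and bound $|\barH|$ by counting messages using that labels lie in $\{0,1\}$ and domain points lie in $\supp(\mcH)$. The only cosmetic difference is that the paper counts $S'$ as an unordered set via $\sum_{i=0}^{k}\binom{n}{i}2^{i}$ while you count ordered sequences via $(2n)^{i}$; both yield $n^{\Theta(k)}$.
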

\begin{proof}
    By definition of a compression scheme, $\kappa$ must be able to compress the support of every $h \in \mcH$ to a short labeled sequence of size at most $k$, such that the output of $\rho$ on this short sequence (together with some appropriate bit string of size at most $\Theta(k)$) correctly labels the entire support of $h$. Thus, if we iterate over all possible realizable sequences and bit strings of size at most $\Theta(k)$, the reconstruction by $\rho$ on all of these necessarily disambiguates every single partial concept in $\mcH$. 
    Since $|\supp(\mcH)| \le n$, the total number of configurations that we need to apply $\rho$ to is at most 
    $\sum_{i=0}^k\binom{n}{i}\cdot2^i\cdot2^{\Theta(k)}$ (choose $i$ distinct elements from $\supp(\mcH)$, label it in one of $2^i$ possible ways, and append a bit string of size at most $\Theta(k)$ to it), 
    which is at most $n^{\Theta(k)}$ as required.

\end{proof}

\subsection{Putting Things Together}
\label{subsec:put-together}
Say there exists a compression scheme $(\kappa, \rho)$ for $\mcH$ defined in \Cref{subsec:pcc-hard-to-disambiguate} above that compresses labeled sequences of size $m$ to size $k(m)$. Then, observe that $(\kappa, \rho)$ defines a compression scheme of (unqualified) size $k=k(m)$ for $\mcH_m$ (for any sequence realizable by $\mcH_m$, which must be of size at most $m$, elongate the sequence (if required) to have size exactly $m$ with duplicate elements, and $(\kappa, \rho)$ now correctly compresses-reconstructs it). From \Cref{lem:compression-implies-disambiguation} above, this implies a disambiguation of $\mcH_m$ of size at most $m^{\Theta(k(m))}$. But then, \Cref{lem:disambiguation-pcc-large} necessitates that
\begin{align*}
    m^{\Theta(k(m))} \ge m^{(\log(m))^{1-o(1)}}
\end{align*}
which gives us that $k(m) = \Omega((\log(m))^{1-o(1)})$ as required.

\section{Sample Compression Scheme in Terms of Graph Dimension}
\label{sec:graph-dim-compression}
We elaborate on the reduction from sample compression schemes for binary hypothesis classes to those for multiclass hypothesis classes from Section 4.1 from \cite{moran2016sample}.
Given a hypothesis class $\barH: \mcX \to \mcY$ having graph dimension $d_G$, construct the \textit{binary} hypothesis class $\barH': (\mcX \times \mcY) \to \{0,1\}$, defined as follows: $\barH' = \{\barh' : \barh \in \barH \}$ where $\barh'$ is defined as follows:
\begin{align*}
    \barh'(x,y) = \begin{cases}
        1 & \text{if } \barh(x) = y, \\
        0 & \text{otherwise.}
    \end{cases}
\end{align*}
We can then see that $d_{\VC}(\barH') = d_G(\barH)$. Now, given a sample $S= \{(x_1,y_1),\dots,(x_m,y_m)\}$ realizable by $\barH$, the compressor constructs the sequence $S' = \{((x_1,y_1),1),\dots,((x_m,y_m),1) \}$ that is realizable by $\barH'$. Theorem 1.4 in \cite{moran2016sample} then implies that there exists a sample compression scheme of size $2^{O(d_G)}$ for $\barH'$. Namely, let $\ERM_{\barH'}(S')$ be any hypothesis in $\barH'$ entirely consistent with $S'$. Then, there exist subsequences $S'_1,\dots, S'_{t}$ of $S'$ (where $t = O(2^{d_G})$) each of size $O(d_G)$ such that the majority vote of $\ERM_{\barH'}(S'_1), \dots, \ERM_{\barH'}(S'_{t})$ is 1 on every $(x_i,y_i)$ pair in $S'$. This equivalently means that the majority vote of $\ERM_{\barH}(S_1),\dots,\ERM_{\barH}(S_t)$ is the correct label $y_i$ for every $x_i$ in $S$. Thus, the compressor compresses $S$ to $S_1,\dots,S_t$ (along with a bit string of size $2^{O(d_G)}$ specifying splits), and the reconstructor invokes ERM on each $S_i$ and takes the majority vote to obtain correct predictions on all of $S$.

\end{document}